\newtheorem{theorem}{Theorem}
\newenvironment{proof}{\textit{Proof}.}{\hfill$\square$}
\newcolumntype{+}{!{\vrule width 2pt}}
\newlength\savedwidth
\renewcommand{\@biblabel}[1]{\quad#1.}
\begin{document}
\vspace*{0.2in}

% Title must be 250 characters or less.
\begin{flushleft}
{\Large
\textbf\newline{SEGAN: A Semi-Supervised Learning Method for Missing Data Imputation} % Please use "sentence case" for title and headings (capitalize only the first word in a title (or heading), the first word in a subtitle (or subheading), and any proper nouns).
}
\newline
% Insert author names, affiliations and corresponding author email (do not include titles, positions, or degrees).
\\
Xiaohua Pan\textsuperscript{1,3},
Weifeng Wu\textsuperscript{2},
Peiran Liu\textsuperscript{3,7},
Guojun Sheng\textsuperscript{8},
Zhen Li\textsuperscript{1,4},
Peng Lu\textsuperscript{5,6*},
Peijian Cao\textsuperscript{5,6},
Jianfeng Zhang\textsuperscript{5,6},
Xianfei Qiu\textsuperscript{4},
Yangyang Wu\textsuperscript{2,3}
\\
\bigskip
\textbf{1} College of Computer Science and Technology, Zhejiang University, Hangzhou, Zhejiang, China
\\
\textbf{2} School of Software Technology, Zhejiang University, Hangzhou, Zhejiang, China
\\
\textbf{3} Binjiang Institute of Zhejiang University, Hangzhou, Zhejiang, China
\\
\textbf{4} China Academy of Space Technology, Beijing, China
\\
\textbf{5} Beijing Life Science Academy, Beijing, China
\\
\textbf{6} Zhengzhou Tobacco Research Institute of China National Tobacco Corporation, Zhengzhou, Henan, China
\\
\textbf{7} Purdue University, West Lafayette, Indiana, United States
\\
\textbf{8} COSMOPlat IoT Technology Co.,Ltd., Qingdao, Shandong, China 
\\
\bigskip

% Use the asterisk to denote corresponding authorship and provide email address in note below.
* Corresponding author\\
E-mail: penglu2004@hotmail.com\\

\end{flushleft}
% Please keep the abstract below 300 words
\section*{Abstract}
In many practical real-world applications, data missing is a very common phenomenon, making the development of data-driven artificial intelligence theory and technology increasingly difficult. Data imputation is an important method for missing data preprocessing. Most existing missing data imputation models directly use the known information in the missing data set but ignore the impact of the data label information contained in the data set on the missing data imputation model.
In this paper, we propose a missing data imputation model based on semi-supervised learning, named SEGAN, which mainly includes three important modules: generator, discriminator, and classifier.
In the SEGAN model, the classifier enables the generator to make more full use of known data and its label information when predicting missing data values. In addition, the SEGAN model introduces a missing hint matrix to allow the discriminator to more effectively distinguish between known data and data filled by the generator. This paper theoretically proves that the SEGAN model that introduces a classifier and a missing hint matrix can learn the real known data distribution characteristics when reaching Nash equilibrium.
Extensive experiments on three public real-world datasets demonstrate that, SEGAN yields a more than 10\% accuracy gain, compared with the state-of-the-art approaches.
%Finally, a large number of experiments were conducted in this article, and the experimental results show that compared with the current state-of-the-art multivariate data imputation method, the performance of the SEGAN model is improved by more than 3\%.

% Please keep the Author Summary between 150 and 200 words
% Use first person. PLOS ONE authors please skip this step. 
% Author Summary not valid for PLOS ONE submissions.   
% \section*{Author summary}
% Lorem ipsum dolor sit amet, consectetur adipiscing elit. Curabitur eget porta erat. Morbi consectetur est vel gravida pretium. Suspendisse ut dui eu ante cursus gravida non sed sem. Nullam sapien tellus, commodo id velit id, eleifend volutpat quam. Phasellus mauris velit, dapibus finibus elementum vel, pulvinar non tellus. Nunc pellentesque pretium diam, quis maximus dolor faucibus id. Nunc convallis sodales ante, ut ullamcorper est egestas vitae. Nam sit amet enim ultrices, ultrices elit pulvinar, volutpat risus.

% \linenumbers

% Use "Eq" instead of "Equation" for equation citations.
\section*{Introduction}
In the information age of today, the issue of data quality has become increasingly pivotal, with the problem of missing data being particularly widespread. This issue is not confined to any single sector but is prevalent across various industries such as healthcare, meteorology, and advertising, and it has a significant impact on data analysis and decision-making. Missing data can lead to incompleteness of information and may also cause biases in data analysis, thereby affecting the accuracy and reliability of decision-making. Faced with this challenge, the academic and industrial communities have devoted efforts to researching data imputation techniques, aiming to find more effective methods to address this issue~\cite{bib1}.

Historically, researchers have developed a variety of data imputation techniques to address the issue of missing data~\cite{bib44}~\cite{bib45}~\cite{bib46}~\cite{bib47}~\cite{bib48}~\cite{bib49}~\cite{bib50}. Early methods often relied on statistics, for instance filling in missing values by inserting mean, median, or mode~\cite{bib2}~\cite{bib3}. While these methods are straightforward in computation, they overlook the potential correlations and dynamic changes among data, often failing to achieve the desired imputation result. With the development of machine learning and deep learning methods, parametric imputation algorithms based on these methods have emerged~\cite{bib1}. These algorithms predict missing values by learning the complex relationships between data, demonstrating more accurate and efficient imputation capabilities.

Moreover, the information contained in data labels is crucial for research on data imputation and data analysis tasks~\cite{bib37} ~\cite{bib38}. However, current studies on data imputation only consider the impact of known data information on model prediction outcomes, neglecting the influence of data label information on imputation prediction research. Furthermore, in industrial applications, data label information also suffers from missing issues. For example, the PhysioBank archives store over 40GB of cardiac medical data, which includes some missing data and a limited amount of data label information~\cite{bib23}. Additionally, for legal reasons, hospitals retain a large volume of unlabeled cardiac data. This necessitates that data imputation algorithms not only fill in missing values but also consider the utilization of incomplete label information, thereby better serving downstream data analysis tasks.

To address these issues, this paper proposes a novel missing data imputation model based on semi-supervised learning, SEGAN, which utilizes both known data values and a subset of data labels to complete missing data. The SEGAN model primarily consists of three modules: a generator, a discriminator, and a classifier~\cite{bib24}. Specifically, the SEGAN model constructs a classifier based on semi-supervised learning, which drives the generator to complete missing values according to known data and data labels, and outputs the completed data. The discriminator aims to distinguish as much as possible between genuine data and the data completed by the generator. In addition, the SEGAN model introduces a hint matrix to the discriminator, encoding the missing status of part of the data. The main contributions are summarized as follows.

\begin{itemize}
\item We propose a missing data imputation model based on semi-supervised learning, SEGAN, which adopts a semi-supervised learning strategy to correct the imputation of incomplete data with partial labels. To the best of our knowledge, this model is the first to utilize data label information for data imputation studies.
\item Within the SEGAN model, the semi-supervised learning classifier is capable of inferring labels for unlabeled data, thereby requiring the generator to estimate or fill in missing data using existing data and data labels. Meanwhile, the model enhances the discriminator's accuracy in comparing known data with completed data by using the hint matrix.
\item We provide theoretical proof that the SEGAN model, based on the missing hint matrix, can effectively understand the information distribution of incomplete data.
\item Extensive experiments using several real-life datasets demonstrate the remarkable performance enhancement of SEGAN, compared with the state-of-the-art approaches.
\end{itemize}

The code is shared openly on Github (https://github.com/niunaicoke/SEGAN-A-Semi-Supervised-Learning-Method-for-Missing-Data-Imputation).

\section*{Related Work}
The imputation of incomplete structured data essentially requires predicting missing data based on known information, enhancing the integrity of structured data to prevent missing data issues from affecting subsequent decision-making analyses negatively. The simplest method to address missing data is to directly remove samples with missing data from the dataset. However, this approach is only suitable when the volume of incomplete data is small, and the removal of data can be ensured not to affect the analysis results ~\cite{bib28}. Such a method can easily reduce the information transmitted by the data and severely impact the outcomes of subsequent analysis. Therefore, to mitigate the problems caused by missing data, experts and scholars have focused on research into data imputation methods ~\cite{bib29}~\cite{bib30}~\cite{bib31}~\cite{bib32} ~\cite{bib33} ~\cite{bib34} ~\cite{bib35} ~\cite{bib36}. Their goal is to predict missing data based on known information, enhance data integrity, and ensure reliable and robust analytical results. Incomplete structured data imputation methods can be categorized into those based on statistical techniques, machine learning-based methods, and deep learning-based methods, depending on the type of prediction model used.

More specifically, statistical techniques for data imputation can be categorized based on different statistical mechanisms into methods that rely on statistical information and those based on similarity~\cite{bib27} ~\cite{bib43}~\cite{bib42}. Methods that use statistical information, such as mean, median, or mode imputation algorithms~\cite{bib2}, estimate missing values by utilizing statistical data descriptors. Additionally, similarity-based imputation algorithms predict missing data for a target sample using the mean of known values from multiple similar samples. This category includes algorithms such as the K-Nearest Neighbor Imputation (KNNI)~\cite{bib3} and Cold Deck Imputation (CDI)~\cite{bib4}.

Machine learning based imputation methods involve training one or more predictive models to estimate and fill in missing values. Depending on the machine learning model utilized, these methods can be classified into three types: decision tree-based, linear regression-based, and data compression-based imputation algorithms~\cite{bib25}. Decision tree-based imputation algorithms construct a predictive decision tree model for each feature with missing data. Examples include the XGBoost Imputation (XGBI)~\cite{bib5} algorithm, which leverages the XGBoost framework, and the MissForest Imputation (Miss-FI)~\cite{bib6} algorithm, which is based on random forests. Linear regression-based imputation algorithms build multiple linear regression models for each feature with incomplete data, for instance, the Multivariate Imputation by Chained Equations (MICE) ~\cite{bib7} algorithm, which employs a series of chained equations, and the Imputation via Individual Model (IIM)~\cite{bib8}, which constructs individual regression models for each feature. Data compression-based imputation algorithms create a data compression predictive model for the entire set of missing data, predicting missing values using singular value decomposition and data reconstruction strategies. This includes the Soft-impute (SI)~\cite{bib9} algorithm, which applies a soft-thresholding approach, the Matrix Factorization Imputation (MFI) ~\cite{bib10}, which factors the data matrix, and the Principal Component Analysis Imputation (PCAI)~\cite{bib11}, which uses PCA for data imputation.

Deep learning-based imputation methods vary according to the specific deep learning models employed and primarily include algorithms based on multi-layer perceptrons, autoencoders, and generative adversarial networks~\cite{bib24}. Multi-layer perceptron-based imputation algorithms construct a predictive model using a multi-layer perceptron (MLP) for each incomplete feature. This includes the Multi-Layer Perceptron Imputation (MLPI)~\cite{bib12} and the Round-Robin Sinkhorn Imputation (RRSI)~\cite{bib13} algorithms. Autoencoder-based imputation algorithms compress incomplete input data into a shallow vector, which is then reconstructed by the decoder into a matrix that closely resembles the input data matrix. Autoencoder-based imputation methods include the Multiple Imputation Denoising Auto-Encoder (MIDAE)~\cite{bib14}, the Variational Auto-Encoder Imputation (VAEI)~\cite{bib15}, the Heterogeneous-Incomplete VAE (HIVAE)~\cite{bib16}, and the Missing Data Importance-Weighted Auto-Encoder Model (MIWAE)~\cite{bib17}. Generative adversarial network-based imputation methods design an imputation module to generate missing data that closely approximates the true known distribution and construct a discriminator module to differentiate between imputed and genuine data as accurately as possible~\cite{bib26}. This category includes the Generative Adversarial Imputation Network (GAIN)~\cite{bib18}, the Graph Imputation Neural Network (GINN) ~\cite{bib19}, the Efficient and Effective Data Imputation System with Influence Functions (EDIT)~\cite{bib20}, and the Scalable Imputation System (SCIS)~\cite{bib21}, which estimates the number of samples for generative adversarial imputation.

In summary, existing imputation methods for incomplete structured data focus solely on data imputation and its processing, neglecting the impact of the imputed results on downstream decision-making outcomes, which leads to a lack of effective support of subsequent decision analysis~\cite{bib22}.

% \subsection*{Etiam eget sapien nibh}

% % For figure citations, please use "Fig" instead of "Figure".
% Nulla mi mi, Fig~\ref{fig1} venenatis sed ipsum varius, volutpat euismod diam. Proin rutrum vel massa non gravida. Quisque tempor sem et dignissim rutrum. Lorem ipsum dolor sit amet, consectetur adipiscing elit. Morbi at justo vitae nulla elementum commodo eu id massa. In vitae diam ac augue semper tincidunt eu ut eros. Fusce fringilla erat porttitor lectus cursus, \nameref{S1_Video} vel sagittis arcu lobortis. Aliquam in enim semper, aliquam massa id, cursus neque. Praesent faucibus semper libero.

% % Place figure captions after the first paragraph in which they are cited.
% \begin{figure}[!h]
% \caption{{\bf Bold the figure title.}
% Figure caption text here, please use this space for the figure panel descriptions instead of using subfigure commands. A: Lorem ipsum dolor sit amet. B: Consectetur adipiscing elit.}
% \label{fig1}
% \end{figure}

% Results and Discussion can be combined.

\section*{Methods}
\subsection*{Problem Definition}
An incomplete dataset comprises a set of multivariate sample collections $S=\left(\mathbf{X}_1,\cdots,\mathbf{X}_s\right)$ with labels $Y=\left(\mathbf{y}_1,\cdots,\mathbf{y}_s\right)$. Each sample $\mathbf{X}$ in $S$ corresponds to a data label $y$. Formally, $\mathbf{X} =\left(X_1,\cdots,X_n\right)\in\mathbb{R}^{d\times n}$, where $X_i=\left(x_i^1,\cdots,x_i^d\right)^T$, and $x_i^j$ denotes the $j$-th feature value of the $i$-th data point in $\mathbf{X}$. For each sample $\mathbf{X}$, a mask matrix $M =\left(M_1,\cdots,M_n\right)\in\mathbb{R}^{d\times n}$ encodes the state of missing data within $\mathbf{X}$. In such state, the mask vector $M_i=\left(m_i^1,\cdots,m_i^d\right)^T$, where $m_i^j =0$ indicates that $x_i^j$ is missing, and otherwise, it indicates existence.

In summary, this paper focuses on addressing the issue of imputing missing multivariate data in an incomplete dataset $S$. The goal is to predict and fill in an appropriate value for each missing entry in $S$ to achieve two main objectives: (1) to refine the imputed multivariate dataset, denoted as $\hat{S}$, to approximate the true complete dataset as closely as possible, assuming such a dataset exists; (2) to ensure that the downstream predictive models based on the imputed multivariate dataset achieve more accurate multivariate data analysis performance compared to models that only use the original incomplete multivariate dataset.

\subsection*{Model Overview}
This section provides an introduction to the architecture of the missing data imputation model SEGAN, which is based on semi-supervised learning. Figure ~\ref{fig1} presents the overall structure of the SEGAN model. The input data include the incomplete dataset, data-label pairs, and the mask matrix. The SEGAN model consists of three main modules: the generator $G$, the discriminator $D$, and the classifier $C$.

\begin{figure}[!h]
\includegraphics[width=0.9\textwidth]{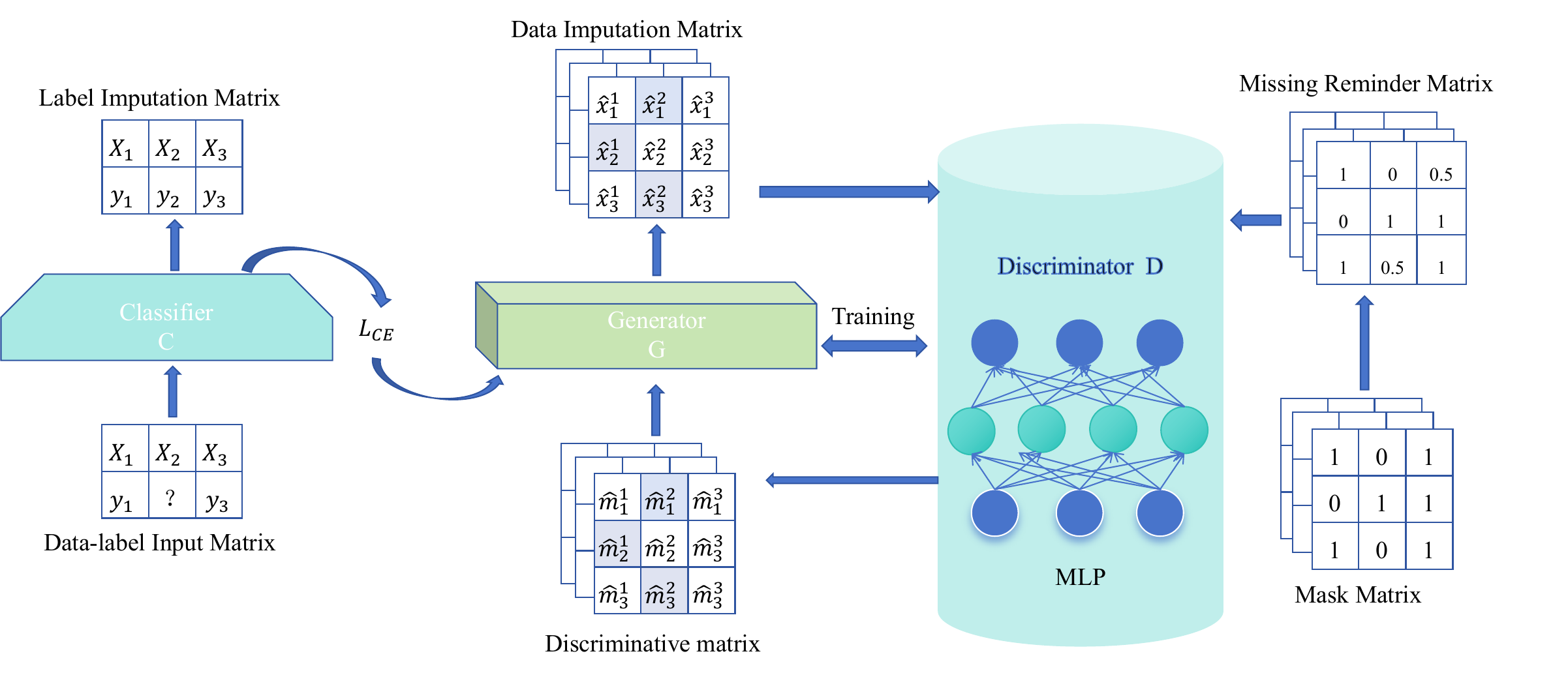}
\caption{
The architecture of SEGAN.}
\label{fig1}
\end{figure}

The generator $G$ utilizes known data and data labels to predict missing values, creating a completed data matrix with the aim of "deceiving" the discriminator $D$. Meanwhile, the classifier $C$ is trained using labeled multivariate data to predict labels for unlabeled data. The classifier $C$ backpropagates the cross-entropy loss function $L_{\text{CE}}$ to the generator $G$, guiding it to pay more attention to multivariate data samples with the same label when repairing incomplete samples.

The discriminator $D$ differentiates between the data completed by the generator and the genuine data based on the completed data and the hint matrix. The hint matrix encodes the state of partial data missing, stored in the mask matrix. The discriminator outputs a probability matrix that contains the likelihood of each data point being genuine. The design of this structure aims to enhance the model's effectiveness in completing incomplete data.

Furthermore, the SEGAN model integrates multi-layer perceptrons (MLPs) with its three primary modules. An MLP is a classical neural network model comprising multiple layers of neurons.
Given a multivariate data sample $\mathbf{X}$, the MLP unit outputs a reconstructed data matrix $\overline{X}$. The update steps for the MLP unit are as follows:
\begin{equation}
    H = \alpha(W_x \mathbf{X} + \mathbf{b}_x), \quad \overline{\mathbf{X}} = W_h H + \mathbf{b}_h \quad
\end{equation}
where each sample has $d$ features, thus the input dataset $\mathbf{X} \in \mathbb{R}^{d \times n}$. Assuming a single hidden layer MLP with $h$ hidden units, the hidden layer weights would be $W_x \in \mathbb{R}^{h \times d}$ and biases $\mathbf{b}_x \in \mathbb{R}^{h \times 1}$, resulting in the hidden layer output $H \in \mathbb{R}^{h \times n}$. Since there are $q$ categories, the output layer weights would be $W_h \in \mathbb{R}^{q \times h}$ and biases $\mathbf{b}_h \in \mathbb{R}^{q \times 1}$, with $\alpha$ representing the activation function.

\subsection*{SEGAN Model}

This section will discuss the three modules of the SEGAN model and provide theoretical proof of the SEGAN model based on the hint matrix and the classifier.

\subsubsection*{Generator}

The generator is constructed using MLP units, with its primary objective being to impute missing parts of multivariate data by learning the distribution and labels of the known genuine multivariate data. In essence, the generator takes the data matrix $\mathbf{X}$ and the mask matrix $\mathbf{M}$ as inputs and utilizes the SEGAN model to produce the imputed data.
The objective function of the generator primarily comprises three types of loss functions: the adversarial loss, the classifier loss, and the reconstruction loss. The adversarial loss is identical to that in Generative Adversarial Networks~\cite{bib24}; the classifier loss arises from the classifier's feedback on label prediction, as specifically defined in Equation (10); the goal of the reconstruction loss is to align the known multivariate time-series data as closely as possible with the data reconstructed by the generator, which is mathematically expressed as follows:

\begin{equation}
    L_{X,\overline{X}} = \mathbf{M} \odot L_e(\mathbf{X}, \overline{\mathbf{X}}) \quad
\end{equation}

Here, $\overline{\mathbf{X}}$ represents the reconstructed multivariate data matrix, and $L_e$ is the Mean Absolute Error (MAE) function, which is used to measure the discrepancy between the predicted values and the ground truth values.
The final objective function for the generator $L_G$ is as follows:

\begin{equation}
    L_G = \frac{1}{n} L_{X,\overline{X}} + \alpha \mathbf{E} \left[ (1 - \mathbf{M}) \odot \log(1 - \mathbf{D}(\mathbf{X}, \mathbf{R})) \right] + \beta L_C(\mathbf{y}, \overline{\mathbf{X}}) \quad
\end{equation}

Here, $n$ denotes the number of sample data points. $L_C(\mathbf{y}, \hat{\mathbf{X}})$ is the classifier's loss function, where $\mathbf{y}$ represents the labels of the input data. The term $\mathbf{E}\left[(1 - \mathbf{M}) \odot \log(1 - \mathbf{D}(\hat{\mathbf{X}}, \mathbf{R}))\right]$ is the specific adversarial loss function for the generator and discriminator during the training process of SEGAN. $\hat{\mathbf{X}}$ is the data output completed by the generator, and the definition of $\hat{\mathbf{X}}$ is as follows:

\begin{equation}
    \hat{\mathbf{X}} = \mathbf{M} \odot \mathbf{X} + (1 - \mathbf{M}) \odot \overline{\mathbf{X}} \quad
\end{equation}

The generator adjusts its parameters with backpropagation, based on the gradient updated from the discriminator and classifier to minimize the generator's objective function $L_G$. Such an iterative process will gradually enable the generator to produce more genuine-like data, adapting to the enhanced capabilities of the discriminator and classifier.

\subsubsection*{Discriminator}

The SEGAN model adopts the traditional concept of generative adversarial models~\cite{bib24}, motivating the generator to learn the genuine data distribution through the optimization process of the generator and discriminator. Compared to a standard discriminator, the discriminator in the SEGAN model produces a probability matrix, where each probability value indicates the likelihood that each value in the multivariate data $\hat{\mathbf{X}}$, completed after imputation, is genuine. The goal of the discriminator is to differentiate as much as possible between the genuine data and the generated data in $\hat{\mathbf{X}}$. Inspired by the GAIN algorithm~\cite{bib6}, the SEGAN model introduces a concept known as the ``missingness hint matrix'' $\mathbf{R}$, which stores information about the missing status of certain data parts and can be defined as follows:
\begin{equation}
    \mathbf{R} = \mathbf{K} \odot \mathbf{M} + 0.5(1 - \mathbf{K}) \quad
\end{equation}
where $\mathbf{K} = (\mathbf{k}_1, \cdots, \mathbf{k}_n) \in \{0, 1\}^{d \times n}$ is a random component matrix for each training iteration, and $\mathbf{R}$ provides partial missing data to facilitate the rapid convergence of the discriminator.

The discriminator $\mathbf{D}$, composed of MLP (Multi-Layer Perceptron) units, takes the generator-completed $\hat{\mathbf{X}}$ and the missingness hint matrix $\mathbf{R}$ as inputs and outputs a discrimination matrix $\hat{\mathbf{M}}$, where $\hat{\mathbf{M}} = (\hat{m}_1, \cdots, \hat{m}_n)$. The discrimination probabilities $\hat{m}_i$ within the discrimination matrix represent the likelihood that each value belongs to the real data, which can be defined as follows:
\begin{equation}
    \mathbf{m}_i = \mathbf{W}_x \mathbf{x}_i + \mathbf{b}_x
\end{equation}
$\mathbf{W}_x$ and $\mathbf{b}_x$ are the parameters of the discriminator, and $\hat{\mathbf{M}} = (\hat{m}_1, \cdots, \hat{m}_n) \in \mathbb{R}^{d \times n}$. Initially, the discriminator utilizes MLP units to extract features from the data and outputs the probability that each value in the multivariate data belongs to the genuine values. Ultimately, the probability matrix output by the discriminator, denoted as $\mathbf{D}(\hat{\mathbf{X}}, \mathbf{R})$, is expressed as follows:
\begin{equation}
    \mathbf{D}(\hat{\mathbf{X}}, \mathbf{R}) = \mathbf{M} = (m_1, \cdots, m_n) \quad 
\end{equation}

In summary, the loss function for the discriminator is defined as follows:

\begin{equation}
    L_D = -\mathbf{E}[\mathbf{M} \odot \log(\mathbf{M}) + (1 - \mathbf{M}) \odot \log(1 - \mathbf{M})] \quad 
\end{equation}

The process of minimizing the discriminator's loss function aims to enhance the accuracy of the discriminator, enabling it to produce probability matrix predictions for the imputed data that are closer to the genuine distribution. This is done to make the probability matrix $\mathbf{D}$ predicted by the discriminator as close as possible to the mask matrix $\mathbf{M}$.

\subsubsection*{Classifier}
In traditional Generative Adversarial Networks (GANs)~\cite{bib9}~\cite{bib10}~\cite{bib24}, the discriminator has the dual task of distinguishing between real and generated data as well as classifying the predicted data. However, the adversarial training in traditional GANs under a semi-supervised learning setting has its limitations~\cite{bib9}. This is due to the theoretical existence of two incompatible model convergence points. The task of distinguishing generated data is often simpler than the task of predicting labels, which can lead the discriminator to focus more on identifying generated data, thereby reducing its performance in label prediction.

To address this issue, the SEGAN model constructs a data classification module based on semi-supervised learning to predict the label information of data. This classification system is similar to the discriminator module and is composed of multiple MLP (Multi-Layer Perceptron) units. To reduce the number of model parameters and increase training speed, the model allows the classifier and the discriminator module to share the parameters of the MLP units. With the self-training strategy of semi-supervised learning, the classifier is capable of effectively learning and simultaneously performing classification labeling on unlabeled data.

During the training process of the SEGAN model, the classifier is trained using labeled data generated by the generator, while also assigning labels to "unlabeled" data samples that are predicted with high confidence. This approach encourages the generator to focus more on multivariate time series data with consistent labels, thereby optimizing the generator's output.

The ultimate goal of the classifier is to identify an optimal set of classifier parameters that will make the highest confidence predictions on the true labels of multivariate time series data. The classifier's prediction result $\mathbf{C}(\mathbf{X})$ can be expressed as:
\begin{equation}
    \mathbf{C}(\mathbf{X}) = f(\mathbf{W}_h\mathbf{H} + \mathbf{b}_h) \quad 
\end{equation}
From this, we can define the classifier loss function $\mathbf{L}_\mathbf{C}$:
\begin{equation}
    \mathbf{L}_\mathbf{C}(\mathbf{y}, \mathbf{X}) = \mathbf{L}_{\mathbf{CE}}(\mathbf{y}, \mathbf{C}(\mathbf{X})) \quad 
\end{equation}
Here, $\mathbf{L}_{\mathbf{CE}}$ represents the cross-entropy loss function, and $f$ denotes the softmax function. Thus, the SEGAN model employs the discriminator and classifier for their respective tasks of distinguishing generated data and predicting data labels. During the SEGAN model update process, the generator receives classification feedback from the classifier, which allows it to fully consider the corresponding feature set in multivariate time series data with the same label when imputing incomplete multivariate data. Concurrently, the classifier continues to train the model using the multivariate time series data filled in by the generator. Therefore, the objective functions of both the generator and classifier in the SEGAN model are inclined to converge simultaneously, effectively addressing the model convergence issue and leading to more precise imputations by the generator.

\subsection*{Theoretical Analysis}

The objective function of the SEGAN model can be defined as a minimax game function:

\begin{align}
\min_{G,C} \max_{D} \ U(G,D,C) &= \mathbb{E}_{\hat{\mathbf{X}},\mathbf{M},\mathbf{R}}\left[\mathbf{M} \odot \log D\left(\hat{\mathbf{X}},\mathbf{R}\right) + (1 - \mathbf{M}) \odot \log(1 - D\left(\hat{\mathbf{X}},\mathbf{R}\right))\right] \nonumber \\
&\quad + \beta L_{\text{CE}}(\mathbf{y}, C(\mathbf{X})) \nonumber \\
&= V(G,D) + \beta L(G,C) 
\end{align}

Here, $\hat{\mathbf{X}}$ represents the multivariate time series data samples filled in by the generator $\mathbf{G}$, $D(\hat{\mathbf{X}},\mathbf{R})$ is the discrimination result from discriminator $\mathbf{D}$, and $C(\hat{\mathbf{X}})$ is the prediction result from classifier $\mathbf{C}$. Other notations such as $\mathbf{M}$ denote the mask matrix for the incomplete data $\mathbf{X}$, $L_{CE}$ is the cross-entropy loss function, $\mathbf{y}$ is the true label of $\hat{\mathbf{X}}$, and $\beta$ represents a hyperparameter.

The SEGAN model $U(G, D, C)$ can be regarded as the USEGAN model $V(G, D)$ with an added classifier $C$. The distribution of the random variables $(\hat{\mathbf{X}},\mathbf{M},\mathbf{R})$ can be defined as $p(\hat{\mathbf{X}},\mathbf{M},\mathbf{R})$, while the marginal distributions of $\hat{\mathbf{X}}$, $\mathbf{M}$, and $\mathbf{R}$ can be respectively defined as $\hat{p}$, $p_{\mathbf{m}}$, and $p_{\mathbf{r}}$.

According to minimax game theory, the SEGAN model, which incorporates a missingness hint matrix and a classifier, will have its generated data distribution and predicted label distribution converges to the true data distribution and true label distribution upon reaching Nash equilibrium. Firstly, Theorem 1 demonstrates that the missingness hint matrix can aid in determining the equilibrium points of the USEGAN model $V(G, D)$. Subsequently, Theorem 2 proves that when converging to Nash equilibrium, the data distribution generated by $G$ and the label distribution predicted by $C$ will each coincide with the true data distribution and true label distribution, respectively. It is noteworthy that the theorems below are predicated on the assumption of relative independence between $\mathbf{M}$ and $\mathbf{X}$, meaning that the multivariate data is Missing Completely At Random (MCAR) \cite{bib25}.

\begin{theorem}
\label{theorem:USEGAN}
%\subsubsection*{Theorem 1}
For a given missingness hint matrix $\mathbf{R}$, the equilibrium state of the USEGAN model $V(G, D)$ can be uniquely determined. At this equilibrium, the data distribution $p$ generated by the USEGAN model and the true data distribution $\hat{p}$ are the same under the given conditions, that is, $p(\mathbf{X}|\mathbf{M}) = p(\mathbf{X}|\mathbf{1})$.
\end{theorem}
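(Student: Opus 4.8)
The plan is to follow the standard GAN-theoretic template (as in Goodfellow et al.~\cite{bib24} and the GAIN analysis~\cite{bib18}): first fix the generator $G$ and compute the optimal discriminator $D^*$ in closed form, then substitute $D^*$ back into $V(G,D)$ to obtain a reduced objective in $G$ alone, and finally argue that minimizing this reduced objective forces the generated distribution to match the true distribution, yielding the claimed identity $p(\mathbf{X}\mid\mathbf{M}) = p(\mathbf{X}\mid\mathbf{1})$. The missingness hint matrix $\mathbf{R}$ enters precisely to make the equilibrium \emph{unique}; without it the discriminator cannot localize which entries are imputed, so the plan is to condition all expectations on $\mathbf{R}$ throughout and track how $\mathbf{R}$ pins down the per-entry conditional distributions.

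First I would rewrite $V(G,D)$ as an integral over the joint distribution $p(\hat{\mathbf{X}},\mathbf{M},\mathbf{R})$. For each fixed coordinate and fixed $\mathbf{R}$, the integrand has the familiar form $a\log D + (1-a)\log(1-D)$ where $a$ and $1-a$ are the densities weighting the ``real'' ($\mathbf{M}=1$) and ``imputed'' ($\mathbf{M}=0$) contributions at that point. Differentiating pointwise in $D$ and setting the derivative to zero gives the optimal discriminator
\begin{equation}
D^*(\hat{\mathbf{X}},\mathbf{R}) = \frac{p(\mathbf{M}=1\mid \hat{\mathbf{X}},\mathbf{R})\,p(\hat{\mathbf{X}}\mid \mathbf{M}=1,\mathbf{R})}{\sum_{b\in\{0,1\}} p(\mathbf{M}=b\mid \hat{\mathbf{X}},\mathbf{R})\,p(\hat{\mathbf{X}}\mid \mathbf{M}=b,\mathbf{R})},
\end{equation}
i.e.\ the posterior probability that a given entry was genuine rather than filled. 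I would justify the interchange of $\max_D$ with the integral by the usual argument that the optimization is pointwise in the value of $D$ at each argument, so the supremum is attained coordinatewise. Here the MCAR assumption (independence of $\mathbf{M}$ and $\mathbf{X}$) is what lets me treat $p(\mathbf{M}\mid\hat{\mathbf{X}},\mathbf{R})$ as determined by $\mathbf{R}$ alone on the entries $\mathbf{R}$ reveals, which is essential for the next step.

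Next I would substitute $D^*$ into $V$ and recognize the resulting expression, up to additive constants, as a Jensen--Shannon-type divergence between the conditional distribution of genuine entries $p(\hat{\mathbf{X}}\mid\mathbf{M}=1,\mathbf{R})$ and the conditional distribution of generator-imputed entries $p(\hat{\mathbf{X}}\mid\mathbf{M}=0,\mathbf{R})$. Since such a divergence is nonnegative and vanishes if and only if the two conditionals coincide, the minimizing $G$ is exactly the one that makes the imputed-entry distribution indistinguishable from the genuine-entry distribution, conditioned on what the hint $\mathbf{R}$ exposes. Translating ``genuine conditioned on $\mathbf{M}=1$'' and ``matched by $G$ on $\mathbf{M}=0$'' back into the notation of the statement gives $p(\mathbf{X}\mid\mathbf{M}) = p(\mathbf{X}\mid\mathbf{1})$, and the uniqueness claim follows because the hint matrix removes the coordinate-permutation ambiguity that would otherwise leave the equilibrium underdetermined.

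The main obstacle I anticipate is the role of $\mathbf{R}$ in securing \emph{uniqueness} rather than mere existence of an equilibrium. Computing $D^*$ and reducing to a divergence is routine, but the subtle point is showing that for a \emph{given} $\mathbf{R}$ the hint reveals enough of $\mathbf{M}$ that the equalities $p(\hat{\mathbf{X}}\mid\mathbf{M}=1,\mathbf{R})=p(\hat{\mathbf{X}}\mid\mathbf{M}=0,\mathbf{R})$, holding for all admissible $\mathbf{R}$ in the support of $p_{\mathbf{r}}$, pin down a single conditional distribution $p(\mathbf{X}\mid\mathbf{M})$. I would handle this by exploiting the construction $\mathbf{R}=\mathbf{K}\odot\mathbf{M}+0.5(1-\mathbf{K})$: the $0.5$ entries mark exactly the coordinates whose mask status is hidden from $D$, so as $\mathbf{K}$ ranges over its distribution the family of constraints indexed by $\mathbf{R}$ collectively determines the full conditional law, and I would verify that this family is rich enough (every coordinate is hidden with positive probability) to force the claimed equality with no residual degrees of freedom.
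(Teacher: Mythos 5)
Your overall template is the same as the paper's: fix $G$, derive the optimal discriminator $D^*(\hat{\mathbf{X}},\mathbf{R})_{ij}=p(m_{ij}=1\mid\hat{\mathbf{X}},\mathbf{R})$ via the pointwise $a\log y+b\log(1-y)$ maximization, substitute back to obtain a reduced objective that is a nonnegative divergence, and then invoke the hint matrix for uniqueness. Your Jensen--Shannon framing of the reduced objective is an acceptable variant: the paper writes it as $\sum_{i,j}\sum_{e}\int p(m_i^j=e,\mathbf{R})\,D_{KL}\bigl(\hat{p}(\cdot\mid\mathbf{R},m_i^j=e)\,\big\|\,\hat{p}(\cdot\mid\mathbf{R})\bigr)\,d\mathbf{R}$, i.e.\ a per-coordinate KL to the mixture, which is the same quantity up to reweighting, and both yield the first-order condition $p(\cdot\mid\mathbf{R},m_i^j=e)=p(\cdot\mid\mathbf{R})$.

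The genuine gap is in the uniqueness step, which you correctly flag as the main obstacle but then resolve with two claims that do not quite work. First, the degeneracy without $\mathbf{R}$ is not a ``coordinate-permutation ambiguity'': the paper's point is that the unhinted conditions reduce to only $n\times d$ equations $p(\mathbf{X}\mid m_i^j=1)=p(\mathbf{X}\mid m_i^j=0)$, which is far too few to determine the conditional law $p(\mathbf{X}\mid\mathbf{M})$ over all $2^{dn}$ masks, so many generator distributions satisfy them. Second, ``every coordinate is hidden with positive probability'' is not the property that rescues uniqueness; what is actually needed (and what the paper uses) is that for each coordinate $(j,i)$ there is positive probability that $\mathbf{R}$ reveals \emph{all other} entries of $\mathbf{M}$ while hiding only $(j,i)$, so that conditioning on $(\mathbf{R},m_i^j=e)$ is equivalent to conditioning on a full mask $\mathbf{M}^e$, and the first-order condition becomes $p(\mathbf{X}\mid\mathbf{M}^0)=p(\mathbf{X}\mid\mathbf{M}^1)$ for two masks differing in exactly one entry. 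The paper then finishes with a connectivity argument on the hypercube $\{0,1\}^{d\times n}$: any two masks are joined by a chain of single-entry flips, so $p(\mathbf{X}\mid\mathbf{M})=p(\mathbf{X}\mid\mathbf{1})$ for every $\mathbf{M}$, and since $\hat{p}(\mathbf{X}\mid\mathbf{1})$ is the (unique) true data distribution the equilibrium is pinned down. Your proposal never states this single-flip-plus-chaining argument, and without it the passage from the per-$\mathbf{R}$ constraints to a unique $p(\mathbf{X}\mid\mathbf{M})$ is unproved.
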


\begin{proof}
The proof of Theorem \ref{theorem:USEGAN} consists of three parts and extends the research of the hint mechanism [6] in the imputation of multivariate data: First, it establishes the existence of an optimal discriminator $\mathbf{D}^*$ for the USEGAN model $V(G, D)$ when the generator $G$ is fixed; second, it demonstrates that a discriminator $\mathbf{D}^*$ without the missingness hint matrix $\mathbf{R}$ would lead to $G$ generating multiple data distributions; and finally, it proves that the missingness hint matrix $\mathbf{R}$ ensures that $G$ generates a unique true data distribution.

Let $\mathbf{M}_\mathbf{e}^{\mathbf{ji}}$ denote the set where $\left\{\mathbf{M} \in \{0,1\}^{d \times n} : m_i^j = e\right\}$ for the optimal discriminator. Consequently, there are two sets, $\mathbf{M}_1^{\mathbf{ji}}$ and $\mathbf{M}_0^{\mathbf{ji}}$, which satisfy the following conditions:

%\begin{equation}
%\small
\begin{align}
V(G,D) &= \int_{\mathcal{X}} \sum_{\mathbf{M} \in \{0,1\}^{d \times n}} \int_{\mathcal{R}} \left(\mathbf{M} \odot \log D(\mathbf{X},\mathbf{R}) \right. \nonumber \\
&\quad \left. + (1-\mathbf{M}) \odot \log(1-D(\mathbf{X},\mathbf{R}))\right) p(\mathbf{X},\mathbf{M},\mathbf{R}) \, d\mathbf{R} \, d\mathbf{X} \nonumber \\
&= \int_{\mathcal{X}} \int_{\mathcal{R}} \sum_{\mathbf{M} \in \{0,1\}^{d \times n}} \left(\sum_{(j,i) : m_i^j=1} \log D(\mathbf{X},\mathbf{R})_i^j \right. \nonumber \\
&\quad \left. + \sum_{(j,i) : m_i^j=0} \log(1-D(\mathbf{X},\mathbf{R})_i^j)\right) p(\mathbf{X},\mathbf{M},\mathbf{R}) \, d\mathbf{R} \, d\mathbf{X} \nonumber \\
&= \int_{\mathcal{X}} \int_{\mathcal{R}} \sum_{i=1}^{n} \sum_{j=1}^{d} \left(\sum_{\mathbf{M} \in \mathbf{M}_1^{\mathbf{ji}}} \log D(\mathbf{X},\mathbf{R})_i^j \right. \nonumber \\
&\quad \left. + \sum_{\mathbf{M} \in \mathbf{M}_0^{\mathbf{ji}}} \log(1-D(\mathbf{X},\mathbf{R})_i^j)\right) p(\mathbf{X},\mathbf{M},\mathbf{R}) \, d\mathbf{R} \, d\mathbf{X}
\end{align}
%\end{equation}

The mapping of the function $y$ is defined as $y \mapsto a \log y + b \log(1 - y)$, and its maximum value within the interval $[0, 1]$ is defined as $\frac{a}{a + b}$. Therefore, when the generator $G$ is fixed, if the discriminator $D$ exists in the following manner, then the USEGAN model $V(G, D)$ will achieve its maximum value.

\begin{equation}
D(\hat{\mathbf{X}}, \mathbf{R})_{ij} = \frac{p(\hat{\mathbf{X}}, \mathbf{R} | m_{ij} = 1)}{p(\hat{\mathbf{X}}, \mathbf{R} | m_{ij} = 1) + p(\hat{\mathbf{X}}, \mathbf{R} | m_{ij} = 0)} = p(m_{ij} = 1 | \hat{\mathbf{X}}, \mathbf{R}) \tag{13}
\end{equation}
Here, $i \in \{1, \cdots, n\}$, $j \in \{1, \cdots, d\}$. The variable $p$ denotes the distribution of the random variable $(\hat{\mathbf{X}}, \mathbf{M}, \mathbf{R})$, with $\mathbf{p}_m$ being the marginal distribution.
In the following section, we will derive the global optimum of $\mathbf{V}(\mathbf{G}, \mathbf{D})$ based on $\mathbf{D}^*$. Clearly, if the missingness hint matrix $\mathbf{R}$ is not provided to $\mathbf{D}^*$, then the generator $\mathbf{G}$ will create multiple data distributions, elevating $\mathbf{V}(\mathbf{G}, \mathbf{D})$ to an equilibrium state. With the help of $\mathbf{D}^*$, we can redefine $\mathbf{V}(\mathbf{G}, \mathbf{D})$ as follows:
\begin{equation}
O(G) = \mathbb{E}_{\mathbf{X}, \mathbf{M}, \mathbf{R}} \left[ j, i : m_{ij} = 1 \log p(m_{ij} = 1 | \mathbf{X}, \mathbf{R}) + j, i : m_{ij} = 0 \log p(m_{ij} = 0 | \mathbf{X}, \mathbf{R}) \right] \tag{14}
\end{equation}

Let $\mathbf{R}_e^{ji}$ represent the set where $R \in \mathbf{R} : p_r(\mathbf{R} | m_i^j = e) > 0$, with $p_r$ referring to the marginal distribution of $R$. Thus, it can be understood as:
\begin{equation}
\begin{aligned}
O(G) &= \sum_{i=1}^{n} \sum_{j=1}^{d} \sum_{e \in \{0, 1\}} \left( \int_{\mathbf{R}_e^{ji}} \int_{\mathbf{X}} p(\mathbf{X}, \mathbf{R}, m_i^j = e) \log p(m_i^j = e | \mathbf{X}, \mathbf{R}) \, dR \, dX \right) \\
&= \sum_{i=1}^{n} \sum_{j=1}^{d} \sum_{e \in \{0, 1\}} \int_{\mathbf{R}_e^{ji}} \int_{\mathbf{X}} p(\mathbf{X}, \mathbf{R}, m_i^j = e) \log p(m_i^j = e | \mathbf{R}) \\
&\quad+ \log \left( \frac{\hat{p}(\mathbf{X} | m_i^j = e, \mathbf{R})}{\hat{p}(\mathbf{X} | \mathbf{R})} \right) \, dR \, dX \\
&= \sum_{i=1}^{n} \sum_{j=1}^{d} \sum_{e \in \{0, 1\}} \int_{\mathbf{R}_e^{ji}} p(m_i^j = e, \mathbf{R}) D_{KL} \left( \hat{p}(\cdot | \mathbf{R}, m_i^j = e) \| \hat{p}(\cdot | \mathbf{R}) \right) \, dR
\end{aligned} \tag{15}
\end{equation}

Here, the validity of the third equation relies on the logarithmic property that $\log(ab) = \log(a) + \log(b)$, where $D_{KL}$ represents the Kullback-Leibler divergence. Given that the Kullback-Leibler divergence between two distributions is always non-negative and only equals zero when the two distributions are identical, it can thus be observed that:
\begin{equation}
O^* = \sum_{i=1}^{n} \sum_{j=1}^{d} \sum_{e \in \{0, 1\}} \int_{\mathbf{R}_e^{ji}} p(m_{ij}=e | \mathbf{R}) \log p(m_{ij}=e | \mathbf{R}) \, dR \tag{16}
\end{equation}
is the global minimum of $O(G)$, and the solution satisfies the following conditions:
\begin{equation}
p(\cdot | \mathbf{R}, m_{ij}=e) = p(\cdot | \mathbf{R}) \tag{17}
\end{equation}
It is important to emphasize that when $\mathbf{R}$ and $M$ are independent, Equation (17) can be rewritten as $\hat{p}(\mathbf{X} | m_{i}^{j}=e) = \hat{p}(\mathbf{X})$. Here, $\hat{p}(\mathbf{X} | m_{i}^{j}=e) = \hat{p}(\mathbf{X})$, and $e \in \{0, 1\}$. This relationship holds true only under the following conditions:
\begin{equation}
p(\mathbf{X} | m_{ij}=1) = p(\mathbf{X} | m_{ij}=0) \tag{18}
\end{equation}
For $i \in \{1, \cdots, n\}$, $j \in \{1, \cdots, d\}$, since the number of linear equations defined by Equation (18) is less than the number of parameters specified by $G$, there exist multiple data distributions that satisfy Equation (17). In the final stage, the paper demonstrates that the missingness hint matrix $\mathbf{R}$, which records partial information of $\mathbf{M}$, can drive the generator $G$ to learn a unique expected data distribution. This is the case if $\mathbf{M}^0, \mathbf{M}^1 \in \{0, 1\}^{d \times n}$ differ by only one element. In such a scenario, the $(j, i)$-th element of $\mathbf{M}^0$ is 0, and the $(j, i)$-th element of $\mathbf{M}^1$ is 1. Consequently, the missingness hint matrix $\mathbf{R}$, associated with the mask matrix $\mathbf{M}$, satisfies $p_r(\mathbf{R} | m_{i}^{j}=e) > 0$. When $e=0$ and $e=1$:

\begin{equation}
p(\mathbf{X} | \mathbf{R}, m_{ij}=e) = p(\mathbf{X} | \mathbf{M}_e, \mathbf{K}) = p(\mathbf{X} | \mathbf{M}_e) p(\mathbf{K} | \mathbf{M}_e) = p(\mathbf{X} | \mathbf{M}_e) p(\mathbf{K}) \tag{19}
\end{equation}
Here, the matrix $\mathbf{K}$ assists $\mathbf{R}$ in encoding partial information of $\mathbf{M}$. The marginal distribution of $\mathbf{K}$ is denoted by $p_{\mathbf{k}}$. The first equation is based on the premise that the information from $\mathbf{M}$ and $\mathbf{K}$ is encapsulated within $\mathbf{R}$. The validity of the last equation arises from the independence between $\mathbf{K}$ and $\mathbf{M}$. Therefore, according to Equation (17), it can be concluded:
\begin{equation}
p(\mathbf{X} | \mathbf{M}^0) = p(\mathbf{X} | \mathbf{M}^1) \tag{20}
\end{equation}
Equation (20) is applicable for $\mathbf{M}^0$ and $\mathbf{M}^1$ which differ only in one variable. Let's assume $\mathbf{M}^1$ and $\mathbf{M}^2$ are any two matrices in $\{0,1\}^{d \times n}$. Therefore, there exists a natural number $z$ and a series of matrices $\mathbf{M}^{1\prime}, \cdots, \mathbf{M}^{z\prime}$ such that $\mathbf{M}^{i\prime}$ and $\mathbf{M}^{(i+1)\prime}$ differ only in one specific variable, with $\mathbf{M}^1 = \mathbf{M}^{1\prime}$, $\mathbf{M}^2 = \mathbf{M}^{z\prime}$, thus:
\begin{equation}
p(\mathbf{X} | \mathbf{M}^1) = p(\mathbf{X} | \mathbf{M}^{1\prime}) = \cdots = p(\mathbf{X} | \mathbf{M}^{z\prime}) = p(\mathbf{X} | \mathbf{M}^2) \tag{21}
\end{equation}
For any $\mathbf{M} \in \{0,1\}^{d \times n}$, it holds true that:
\begin{equation}
p(\mathbf{X} | \mathbf{M}) = p(\mathbf{X} | \mathbf{1}) \tag{22}
\end{equation}
Given that $\hat{p}(\mathbf{X} | \mathbf{1})$ represents the true and unique data distribution, it follows that any distribution that satisfies Equation (17) must be unique. In other words, the equilibrium position of the USEGAN model $V(G, D)$, which depends on $\mathbf{R}$, can be uniquely determined. At this point, the data distribution $\hat{p}(\mathbf{X} | \mathbf{M})$ generated by the generator $G$ will be consistent with the actual data distribution.
\end{proof}

%\subsubsection*{Theorem 2}

\begin{theorem}
The SEGAN model $U(G, D, C)$ achieves Nash equilibrium under the condition that $\hat{p}(\mathbf{X} | \mathbf{M}) = \hat{p}(\mathbf{X} | \mathbf{1})$ and $p_c(\cdot | \mathbf{X}) = p_l(\cdot | \mathbf{X})$. Here, $p_c(\cdot | \mathbf{X})$ denotes the predicted data label distribution by the model given $\mathbf{X}$, while $p_l(\cdot | \mathbf{X})$ corresponds to the true data label distribution given $\mathbf{X}$.
\end{theorem}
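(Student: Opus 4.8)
The plan is to exploit the additive decomposition $U(G,D,C) = V(G,D) + \beta L(G,C)$ recorded in Equation (11), together with the structural fact that the discriminator $D$ occurs only in $V$ while the classifier $C$ occurs only in $L$. This separation lets me treat the two equilibrium conditions independently and then argue that a single shared generator $G$ can realize both at once.

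First I would dispose of the data-distribution condition by invoking Theorem~\ref{theorem:USEGAN} directly. Since $D$ does not appear in $\beta L(G,C)$, the inner maximization satisfies $\max_D U = \max_D V(G,D) + \beta L(G,C)$, so the optimal discriminator $D^*$ is exactly the posterior of Equation (13); minimizing the resulting $\max_D V(G,D)$ over $G$ then forces $\hat{p}(\mathbf{X}\mid\mathbf{M}) = \hat{p}(\mathbf{X}\mid\mathbf{1})$ by the uniqueness argument already carried out in Theorem 1. This delivers the first equilibrium condition with essentially no new computation.

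Next I would handle the label condition. Because $C$ appears only in $L$, the minimization over $C$ can be pulled inside, and I would rewrite the classifier term using the standard identity $L_{CE}(\mathbf{y}, C(\mathbf{X})) = H(p_l(\cdot\mid\mathbf{X})) + D_{KL}\!\left(p_l(\cdot\mid\mathbf{X}) \,\|\, p_c(\cdot\mid\mathbf{X})\right)$, in which the entropy term $H(p_l(\cdot\mid\mathbf{X}))$ does not depend on $C$. Non-negativity of the Kullback--Leibler divergence, which vanishes precisely when its two arguments coincide, then shows that $\min_C \beta L(G,C)$ is attained exactly at $p_c(\cdot\mid\mathbf{X}) = p_l(\cdot\mid\mathbf{X})$, collapsing the term to $\beta H(p_l(\cdot\mid\mathbf{X}))$. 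This is the same KL-based mechanism used for the data term in Theorem 1, now transported to the label term.

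Finally I would assemble the Nash equilibrium by substituting $D^*$ and $C^*$ back to obtain a reduced objective in $G$ alone, $U^*(G) = \max_D V(G,D) + \beta \min_C L(G,C)$, and argue that its minimizer realizes both conditions simultaneously. The main obstacle is precisely the compatibility step: the generator is shared between the adversarial term and the classifier term, so I must rule out a conflict in which some $G$ that fails to reproduce the true data distribution nonetheless lowers the classifier term, for example by emitting artificially separable completions. I would resolve this by noting that, under the MCAR assumption, faithful reconstruction is exactly what preserves the label-relevant content of each sample, so the minimized classifier term $\min_C L(G,C)$ attains its smallest value at the same $G$ that reproduces the true distribution; the two objectives are therefore co-minimized rather than in tension, which is the \emph{simultaneous convergence} asserted in the Classifier subsection. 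Consequently the unique minimizer of $U^*(G)$, paired with $(D^*, C^*)$, satisfies both $\hat{p}(\mathbf{X}\mid\mathbf{M}) = \hat{p}(\mathbf{X}\mid\mathbf{1})$ and $p_c(\cdot\mid\mathbf{X}) = p_l(\cdot\mid\mathbf{X})$, which is precisely the claimed Nash equilibrium.
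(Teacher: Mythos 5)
Your proposal follows essentially the same route as the paper's own proof: the additive decomposition $U = V + \beta L$, a direct appeal to Theorem~1 for the condition $\hat{p}(\mathbf{X}\mid\mathbf{M}) = \hat{p}(\mathbf{X}\mid\mathbf{1})$, and the rewriting of the cross-entropy loss as an entropy term plus a Kullback--Leibler divergence whose non-negativity forces $p_c(\cdot\mid\hat{\mathbf{X}}) = p_l(\cdot\mid\hat{\mathbf{X}})$. Your closing compatibility step --- ruling out tension between the adversarial and classifier terms over the shared generator $G$ --- addresses a gap the paper's proof simply passes over in silence, so while your MCAR-based justification for it remains informal, you are if anything more careful than the published argument.
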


\begin{proof}
In the SEGAN model $U(G, D, C) = V(G, D) + \beta L(G, C)$, the term $L(G, C)$ can be defined as:
\begin{align}
L(G, C) &= L_{CE}(y, C(\hat{X})) = \mathbb{E}_{\hat{X} \sim G(X)}\left[ \mathbb{E}_{y' \sim p_l(y | \hat{X})}\left[ -\log p_c(y' | \hat{X}) \right] \right] \nonumber \\
&= \mathbb{E}_{\hat{X} \sim G(X)}\left[ D_{KL}(p_l(\cdot | \hat{X}) \| p_c(\cdot | \hat{X})) \right] \nonumber \\
&+ \mathbb{E}_{y' \sim p_l(y | \hat{X})}\left[ -\log p_l(y' | \hat{X}) \right] \tag{23}
\end{align}

Here, the second equation arises from the definition of the cross-entropy loss function, that is, $L_{CE}(q, q') = -\sum_{x} q(x) \log q'(x)$. This means that for the SEGAN model, minimizing $L(G, C)$ is equivalent to minimizing the Kullback-Leibler divergence $D_{KL}(p_l(\cdot | \hat{X}) \| p_c(\cdot | \hat{X}))$. The latter is non-negative and can only be zero when $p_l(\cdot | \hat{X}) = p_c(\cdot | \hat{X})$. Furthermore, according to Theorem 1, it can be seen that when the model reaches equilibrium, $\hat{p}(\mathbf{X} | \mathbf{M}) = \hat{p}(\mathbf{X} | \mathbf{1})$. Therefore, it is only when the SEGAN model reaches Nash equilibrium that $\hat{p}(\mathbf{X} | \mathbf{M}) = \hat{p}(\mathbf{X} | \mathbf{1})$ and $p_l(\cdot | \hat{X}) = p_c(\cdot | \hat{X})$, meaning that the data distribution created by generator $G$ and the label distribution predicted by classifier $C$ will align with the actual data distribution and label distribution.
\end{proof}

\section*{Experiment}

This section evaluates and compares the imputation performance of the proposed SEGAN algorithm with 12 existing imputation models.
All algorithms are implemented in the Python environment, and all experiments are conducted on an Intel Core server with a 2.80GHz processor, equipped with a TITAN Xp 12GiB GPU and 192GB RAM.

\subsection*{Experiment Setup}
\textbf{Dataset}.
To thoroughly verify the effectiveness of the SEGAN algorithm under real and complex missing data scenarios, three real-world incomplete datasets were used in the experiments of this chapter. (1) The COVID-19 Trial Tracker (Tria) dataset ~\cite{bib41} presents clinical trial data on COVID-19 research and tracks the availability of study outcomes. The Tria dataset records 9 features of 6,433 trials and contains a 9.63\% missing data rate. (2) The Emergency Declarations Timeline (Emer) dataset ~\cite{bib40} chronicles the emergency declarations and mitigation policies of various U.S. states starting from January 20, 2020, comprising 8,364 samples with 22 features and a 62.69\% missing data rate. (3) The Government Response Timeline (Resp) dataset ~\cite{bib39} compiles government responses starting from January 1, 2020, including 200,737 samples with 19 features, among which 5.66\% of the data is missing.

\textbf{Metrices}.
In the experiments, Root Mean Squared Error (RMSE) is employed to measure the imputation accuracy of the imputation models. A lower RMSE value indicates better model performance. Additionally, 20\% of the known data is randomly deleted during the experiments, and the RMSE is calculated based on these data. Each evaluation metric value refers to the average of the results from five experiments under different random data splits. Moreover, in the experiments, min-max normalization is applied to numerical features on each dataset to prevent certain dimensions from dominating the model training, and one-hot encoding is used to represent categorical features.

\textbf{Baseline}.
The experiments include 12 existing state-of-the-art imputation methods for comparison, which comprise 5 machine learning-based imputation methods: MissForest (MissF), XGBoost Imputer (XGBI), Bayesian PCA (Baran), Multiple Imputation by Chained Equations (MICE), and Iterative Collaborative Filtering for Recommender Systems (ICLR); 2 imputation methods based on Multilayer Perceptrons: DataWig and Robust Recursive Imputation (RRSI); 5 imputation methods based on autoencoders: Multiple Imputation using Denoising Autoencoders (MIDAE), Variational Autoencoder Imputer (VAEI), Multiple Imputation with Wasserstein Autoencoders (MIWAE), Efficient Data Imputation with Decision trees (EDDI), and Heteroscedastic Imputation Variational Autoencoder (HIVAE); 2 imputation models based on Generative Adversarial Networks: Graph Imputation Neural Network (GINN) and Generative Adversarial Imputation Nets (GAIN); and the imputation algorithm optimization framework EDIT.

\textbf{Implementation details}.
The source code for all data imputation models originates from the corresponding publications. For all machine learning-based imputation methods, the learning rate is set to 0.3, with the number of iterations set at 100. The MissForest (MissFI) algorithm is configured with 6 decision trees. Baran employs AdaBoost as the predictive model. The Multiple Imputation by Chained Equations (MICE) algorithm performs imputation 20 times. For all deep learning-based imputation algorithms, the learning rate is set to 0.001, with a dropout rate of 0.5, training epochs are set to 30, batch size at 128, and the ADAM optimizer is used for network training. The Multiple Imputation using Denoising Autoencoders (MIDAE) algorithm consists of 2 fully connected layers with 128 neural units each. The encoder and decoder of the Variational Autoencoder Imputer (VAEI) are fully connected networks, each module having two hidden layers with 20 neurons each, and the latent vector dimensionality is 10. In the Graph Imputation Neural Network (GINN) algorithm, the discriminator is a simple 3-layer feedforward network. In the Generative Adversarial Imputation Nets (GAIN) algorithm, both the generator and discriminator are modeled as 2-layer fully connected networks. The user-specified parameter $\alpha$ for the EDIT-GAIN algorithm is set to 100\%. Additionally, in the SEGAN algorithm, the discriminator's time hint matrix encodes 80\% of the mask matrix information, with the default data containing a label rate of 100\%.

\begin{table}[t]
\centering
\begin{tabular}{lccc}
\hline
Method  & Tria  & Emer  & Resp  \\
\hline
MissFI  & 0.417 & 0.377 & ---   \\
XGBI    & 0.418 & 0.372 & ---   \\
Baran   & 0.412 & ---   & ---   \\
MICE    & 0.402 & ---   & ---   \\
ICLR    & 0.411 & ---   & ---   \\
DataWig & 0.458 & 0.389 & ---   \\
RRSI    & 0.398 & 0.358 & ---   \\
MIDAE   & 0.445 & 0.378 & ---   \\
MIWAE   & 0.396 & ---   & ---   \\
HIVAE   & 0.396 & 0.357 & 0.398 \\
GAIN    & 0.398 & 0.352 & 0.396 \\
EDIT    & 0.384 & 0.341 & 0.386 \\
SEGAN   & \textbf{0.376} & \textbf{0.332} & \textbf{0.380} \\
\hline
\end{tabular}
\caption{Comparison of methods on incomplete datasets.}
\label{table1}
\end{table}

\section*{Result}
Table ~\ref{table1} presents the results of missing feature imputation experiments on three incomplete datasets using various data imputation methods. It should be noted that some imputation methods were unable to complete model training within 24 hours, resulting in missing outcomes, which are denoted by ``$-$''.
%Some values are unavailable (denoted by ``$-$''), as the corresponding methods cannot finish within $10^5$ seconds.

It is evident that the SEGAN algorithm significantly outperforms all comparison algorithms in terms of imputation performance. In terms of imputation accuracy (i.e., RMSE), the SEGAN algorithm achieves an average improvement of 2.09\% over the best comparison algorithm, EDIT, and even a 2.64\% increase in imputation accuracy over the CAEGAN algorithm on the Emer dataset. This improvement can be attributed to the SEGAN algorithm's consideration and analysis of data label information, which enhances the model's imputation accuracy and stability.
It is important to note that due to the high complexity of models such as MissForest (MissFI), XGBoost Imputer (XGBI), Bayesian PCA (Baran), Multiple Imputation by Chained Equations (MICE), Iterative Collaborative Filtering for Recommender Systems (ICLR), DataWig, Robust Recursive Imputation (RRSI), Multiple Imputation using Denoising Autoencoders (MIDAE), and Multiple Imputation with Wasserstein Autoencoders (MIWAE), they could not complete model training within 24 hours. Consequently, these algorithms will not be compared in subsequent experiments.

\subsection*{Sensitivity Assessment of Missingness Rate}

Figure ~\ref{fig2} illustrates the RMSE results of the imputation algorithms as the data missingness rate Rm (i.e., the proportion of missing data in each dataset) increases from 20\% to 80\%. It is observed that in all cases, the SEGAN algorithm achieves better imputation accuracy, indicated by lower prediction errors (RMSE), compared to the existing optimal imputation algorithms. As the missingness rate Rm increases, the imputation performance of the SEGAN algorithm remains more robust. 

\begin{figure}[t]
    \centering
    \begin{subfigure}{0.32\textwidth}
        \includegraphics[width=\linewidth]{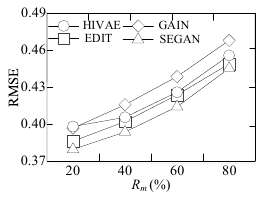}
        \caption{Tria}
        \label{fig:sub1}
    \end{subfigure}\hfill
    \begin{subfigure}{0.32\textwidth}
        \includegraphics[width=\linewidth]{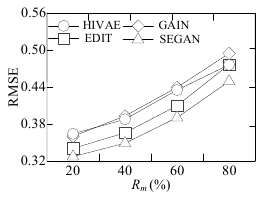}
        \caption{Emer}
        \label{fig:sub2}
    \end{subfigure}\hfill
    \begin{subfigure}{0.32\textwidth}
        \includegraphics[width=\linewidth]{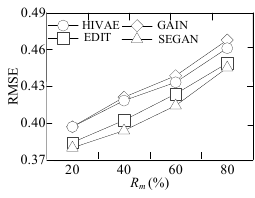}
        \caption{Resp}
        \label{fig:sub3}
    \end{subfigure}
    \caption{The prediction performance vs. missing rate $R_m$.}
    \label{fig2}
\end{figure}

Furthermore, the imputation accuracy of all algorithms tends to decline with the increase in the missingness rate. The reason for this is that as the missingness rate rises, the amount of known data available to the imputation algorithms decreases, leading to a corresponding decline in model imputation accuracy.

\subsection*{Ablation Study}

In this experiment, we explored the impact of different components within the SEGAN algorithm on its imputation performance. Table~\ref{table2} presents the RMSE for the SEGAN algorithm across various datasets. The S-no-C algorithm refers to SEGAN without the classifier, S-no-D denotes SEGAN without the discriminator, and S-no-R represents SEGAN without the time hint matrix.

\begin{table}[t]
\centering
\begin{tabular}{lccc}
\toprule
Method & Tria & Emer & Resp \\
\midrule
S-no-C & 0.380 & 0.340 & 0.384 \\
S-no-D & 0.395 & 0.350 & 0.393 \\
S-no-R & 0.385 & 0.340 & 0.385 \\
SEGAN  & \textbf{0.376} & \textbf{0.332} & \textbf{0.380} \\
\bottomrule
\end{tabular}
\caption{The ablation study of SEGAN}
\label{table2}
\end{table}

It was found that each component of the SEGAN algorithm, namely the classifier, discriminator, and time hint matrix, contributes to varying degrees in enhancing the imputation performance of the SEGAN algorithm. Specifically, when the classifier, discriminator, or time hint matrix is removed, the imputation accuracy of the SEGAN algorithm decreases by an average of 1.51\%, 4.63\%, and 2.35\%, respectively. Therefore, the discriminator within the SEGAN algorithm has the most significant impact on the model's imputation performance.

\subsection*{Post-Imputation Predictive Evaluation}

 The final set of experiments validated the performance of the SEGAN algorithm on predictive tasks following data imputation. Table ~\ref{table3} shows the experimental results of the imputation algorithms on the classification task for the Tria dataset and the regression tasks for the Emer and Resp datasets. A higher AUC value indicates better predictive performance, while the opposite is true for MAE. In the experiments, the imputation methods were first used to repair incomplete datasets; then, a regression/classification model composed of three fully connected layers was trained on the imputed data. The number of training epochs was set to 30, the learning rate was set to 0.005, the dropout rate was set to 0.5, and the batch size was set to 128.

\begin{table}[ht]
\centering
\begin{tabular}{lccc}
\toprule
Method & \multicolumn{3}{c}{Tria} \\
       & AUC   & RMSE    & RMSE \\
\midrule
HIVAE  & 0.896 & 123.232 & 107.382 \\
GAIN   & 0.903 & 122.312 & 106.216 \\
EDIT   & 0.912 & 121.223 & 105.493 \\
SEGAN  & \textbf{0.923} & \textbf{120.683} & \textbf{104.392} \\
\bottomrule
\end{tabular}
\caption{Post-imputation evaluation on imputation methods}
\label{table3}
\end{table}

It was observed that the predictive accuracy of the SEGAN algorithm surpasses all comparison algorithms. Specifically, compared to the best-performing comparison algorithm, EDIT, the SEGAN algorithm reduced the MAE by an average of 0.75\% on regression tasks and increased the AUC by an average of 1.21\% on classification tasks. Moreover, the SEGAN algorithm showed the most significant improvement on the classification task for the Tria dataset compared to the EDIT algorithm. This further confirms the effectiveness of the SEGAN algorithm.

\section*{Discussion and Conclusions}

In response to the issue of incomplete data and label information, this paper proposes a semi-supervised learning-based method for missing data imputation named SEGAN, which includes three modules: a generator, a discriminator, and a classifier. The SEGAN algorithm introduces a timing hint matrix to the discriminator to better distinguish between generated and real data and proposes a semi-supervised classifier to address the problem of insufficient data labels. Additionally, this chapter also demonstrates that with the aid of the timing hint matrix and the semi-supervised classifier, the SEGAN algorithm can achieve a unique Nash equilibrium, allowing the generated data distribution and predicted label distribution to converge respectively to the true data distribution and the true label distribution. Ultimately, through extensive experimentation, it is proven that the SEGAN algorithm significantly outperforms existing state-of-the-art methods for multivariate time-series data imputation on both data imputation tasks and downstream prediction tasks.

\section*{Acknowledgments}
This research was funded by the National Key Research and Development Program, China (No. 2023YFC3306303), the Major scientific and technological projects of China National Tobacco Corporation(No. 110202201001(JY -01)), and the Zhejiang Provincial “Jianbing” Lingyan” Re-search and Development Program, China(No. 2023C01213). \\Supported by State Key Laboratory of Massive Personalized Customization System and Technology (H\&C-MPC-2023-05-02).

\begin{refcontext}[sorting = none]
\printbibliography
\end{refcontext}

\end{document}